\def\eqref#1{equation~(\ref{#1})}
\def\1{\bf{1}}
\theoremstyle{plain}
\newtheorem{remark}{Remark}
\newtheorem{prop}{Proposition}
\def\Ddots{\mathinner{\mkern1mu\raise\p@
\vbox{\kern7\p@\hbox{.}}\mkern2mu
\raise4\p@\hbox{.}\mkern2mu\raise7\p@\hbox{.}\mkern1mu}}
\newcommand*{\rom}[1]{\expandafter\@slowromancap\romannumeral #1@}
\title{Robust Network Learning via Inverse Scale Variational Sparsification}
\begin{document}

\author{%
  Zhiling Zhou$^{*\diamond}$
  \qquad 
  Zirui Liu$^{*\diamond}$
  \qquad 
  Chengming Xu$^{\diamond}$
  \qquad 
  Yanwei Fu$^{\diamond}$
  \qquad 
  Xinwei Sun$^{\diamond\star}$
}

\makeatletter\def\Hy@Warning#1{}\makeatother
\def\thefootnote{$*$}\footnotetext{Equal Contribution}
\makeatletter\def\Hy@Warning#1{}\makeatother
\def\thefootnote{$\diamond$}\footnotetext{School of Data Science, Fudan University; \{zlzhou20, zrliu20, cmxu18, yanweifu, sunxinwei\}@fudan.edu.cn}
\makeatletter\def\Hy@Warning#1{}\makeatother
\def\thefootnote{$\star$}\footnotetext{Correspondence Author}

\maketitle

\begin{abstract}

While neural networks have made significant strides in many AI tasks, they remain vulnerable to a range of noise types, including natural corruptions, adversarial noise, and low-resolution artifacts. Many existing approaches focus on enhancing robustness against specific noise types, limiting their adaptability to others. Previous studies have addressed general robustness by adopting a spectral perspective, which tends to blur crucial features like texture and object contours. Our proposed solution, however, introduces an inverse scale variational sparsification framework within a time-continuous \emph{inverse scale space} formulation. This framework progressively learns finer-scale features by discerning variational differences between pixels, ultimately preserving only large-scale features in the smoothed image. Unlike frequency-based methods, our approach not only removes noise by smoothing small-scale features where corruptions often occur but also retains high-contrast details such as textures and object contours. Moreover, our framework offers simplicity and efficiency in implementation. By integrating this algorithm into neural network training, we guide the model to prioritize learning large-scale features. We show the efficacy of our approach through enhanced robustness against various noise types.

\end{abstract}

\section{Introduction}

Despite the significant achievements of deep learning models in various imaging tasks \citep{xie2012image, ren2015faster, dosovitskiy2020image, long2015fully}, they are vulnerable to different types of noise, including adversarial noise \citep{goodfellow2014explaining, ilyas2019adversarial}, natural corruptions \citep{hendrycks2018benchmarking}, and compression artifacts in low-resolution images \citep{bulat2018learn, lugmayr2019unsupervised}. This vulnerability can lead to significant safety issues in practical applications, posing a major barrier to model deployment.

Many studies have aimed to enhance robustness against each individual type of noise~\citep{hendrycks2019augmix, lopes2019improving, bulat2018learn, sun2020test, mummadi2019defending, kumari2019harnessing, szegedy2016rethinking}. Most of these rely on data augmentation, using methods tailored to specific types of noise. For example, some works~\citep{hendrycks2019augmix, lopes2019improving, sun2020test} generated images with natural corruptions for training or self-supervised learning. Similarly, adversarial training~\citep{mummadi2019defending, kumari2019harnessing} involves adding small, often imperceptible, perturbations to create adversarial samples for training. Unfortunately, in real-world situations, we can't predict the types of noise we will face, so focusing on just one type is impractical. This limits the usefulness of robust networks. Therefore, we need to find a way to improve network robustness that works for all types of noise in a unified manner.

Several efforts have achieved general robustness by either learning domain-invariant representations for out-of-distribution generalization \citep{li2018deep, sun2021recovering} or by smoothing out high-frequency components \citep{yucel2023hybridaugment++, sun2022spectral}. 
However, learning invariant representations requires data from multiple domains,  which is often hard to obtain. Conversely, while frequency-based methods reduce high-frequency noise, they often blur important features like edges and textures, limiting their effectiveness.

On the other hand, some recent pilot studies aim to improve general robustness by focusing on learning large-scale information (or 'features'). For example, they emphasize larger gradient components of low-dimensional manifolds in natural images \citep{li2020defense} and higher entropy of feature activations \citep{wang2021rethinking}. Theoretically, we will align with such an idea, and re-introduce the classical inverse scale space theory \citep{scherzer2001inverse}, which is a generalized form of Tikhonov-Morozov regularization, can iteratively refine a sequence of inverse scale-space variations of natural images, gradually evolving toward the noisy ones\footnote{This idea essentially resembles the diffusion model.}. This method effectively removes small-scale information, like subtle intensity variations and fine-grained patterns \citep{burger2005nonlinear}, and captures large-scale features, like general shapes, as shown in Fig.~\ref{fig.sol_path} (a).

Formally, we generalize the inverse scale space method, and propose a novel inverse scale variational sparsification method that  can effectively smooth out noise components without blurring important features encoded in the image.
To enforce variational sparsity, we leverage the Total-Variation (TV) regularization \citep{rudin1992nonlinear, chambolle2004algorithm} in a newly proposed ordinary differential equation in the inverse scale space \citep{burger2005nonlinear, scherzer2001inverse, ORXYY16}. Starting from a blank image without any information, this equation will generate a TV-regularized image path, where large-scale features are learned faster than small-scale ones. Therefore, with a proper early-stopping time, it can effectively eliminate small-scale features while preserving large-scale ones in the resulting regularized image, as illustrated in Fig.~\ref{fig.sol_path} (b). Furthermore, equipped with a simple discretization and an efficient sparse projection method, our dynamics can be easily implemented with an iterative algorithm, dubbed as \textbf{Vi}sion \textbf{Ro}bust \textbf{L}inearized \textbf{B}regman \textbf{I}teration (ViRoLBI) in this paper.

\begin{figure*}
    \centering
\includegraphics[width=1.0\textwidth]{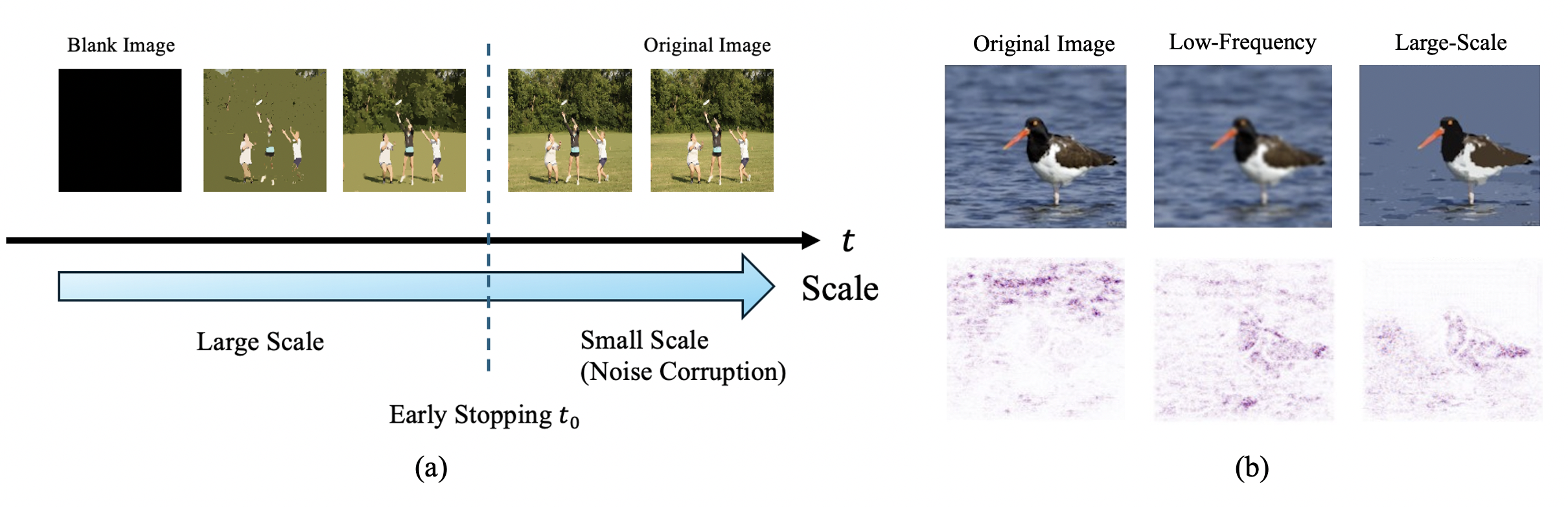}
    \caption{\textbf{(a)} Visualization of Inverse Scale Space. As $t$ grows, our method progressively learns finer scale information, until fully recovering the original image. \textbf{(b)} Illustration of the difference between low-frequency components and large-scale features. The first row shows the image, and the second shows visualization via Grad-CAM \citep{selvaraju2017grad}. Unlike low-frequency images, large-scale images smooth out fine-grained details without blurring important features such as texture and object contours, effectively removing redundant information.}
    \label{fig.sol_path} 
\end{figure*}

To further enhance the robustness, we introduce several training procedures to integrate ViROLBI, including \emph{fixed training procedure} and \emph{iterative training procedure}. Specifically, the fixed training directly trains the model parameters on smoothed data with fixed sparsity; while the iterative training alternatively runs the instance smoothing algorithm and optimizes the model parameters. Critically, we can also apply the above procedure to tune any trained model. To validate the effectiveness of the proposed pipeline, we conduct extensive experiments on various types of noise, including adversarial noise, low-resolution images, and natural corruption. 

Our main contributions are summarized as follows.
\begin{itemize}[topsep=1pt]

    \item We introduce an image sparsification approach as a differential inclusion with Total Variation regularization, which can effectively remove small-scale features. 

    \item We introduce a sparse projection technique to derive the Total Variation regularized image, enhancing the algorithm's implementation efficiency.

    \item We present several training procedures that seamlessly integrate our sparsification algorithm into the training process, which further enhances the robustness of visual models.

    \item Our model demonstrates promising results in robustness tasks, including noisy classification, adversarial defense, and low-resolution classification. Additionally, we employ visualization to illustrate our method's capacity to capture semantic features.
\end{itemize}

\section{Related Work}

\textbf{Vision Robustness.} The robustness issue in vision tasks has been intensively studied. Most of these works only focused on specific types of noise, mainly including natural corruption \citep{momeny2021noise, mintun2021interaction}, adversarial noise \citep{szegedy2013intriguing, costa2024deep}, and low-resolution artifacts \citep{bulat2018learn, lugmayr2019unsupervised}. For each specific type considered, these works exploited properties of the noise and tailored their methods accordingly to achieve robustness \citep{laugros2022synthetic}. Specifically, since artifacts in natural corruption can be easily modeled, one could employ data augmentation \citep{hendrycks2019augmix, lopes2019improving, hendrycks2021many} or self-supervised learning \citep{sun2020test, wang2020tent} to improve robustness. The data augmentation method was applied to low-resolution images by generating resolution-degraded images with compressed artifacts \citep{bulat2018learn, kim2020unsupervised}. Similarly, for adversarial noise, it is common to perform adversarial training \citep{tramer2018ensemble, mummadi2019defending, kumari2019harnessing, wong2019fast} using data generated by adversarial attacks, with the difference from data augmentation that it is an end-to-end optimization.

There are also some works focused on general robustness, by either learning domain-invariant representation \citep{li2018deep, sun2021recovering} or identifying only low-frequency features for prediction \citep{yucel2023hybridaugment++, wang2023lfaa, yin2019fourier}. However, the representation learning methods commonly relied on un-pooled data from multiple domains, while smoothing high-frequency components will induce compression artifacts like low-resolution images, thus losing important features such as the contour and content of the object \citep{maiya2021frequency, wang2020high, zhang2022range}. In contrast, we provide a new perspective of inverse scale space, which can efficiently capture important and large-scale features while removing small-scale ones where corruptions frequently occur, thereby achieving robustness uniformly against various types of noise.

\textbf{Total Variation and Inverse Scale Space methods.} Total Variation (TV), proposed by~\citep{rudin1992nonlinear}, has been successfully applied in various vision tasks including denoising~\citep{11, 16},
deconvolution~\citep{17},
deblurring~\citep{11},
superresolution~\citep{marquina2008image},
structure-texture decomposition~\citep{aujol2006structure},
and segmentation~\citep{25}. Recently, \citep{yeh2022total} has shown the benefit of deep learning models brought by the introduction of the TV Optimization layer. On the other hand, the \textit{inverse scale space methods} \citep{burger2005nonlinear, scherzer2001inverse} were firstly proposed in image denoising. Different from previous methods that smooth firstly small-scale features from a noisy image, these methods start from a blank image and progressively learn finer scale information, until successfully recover the clean image from the noisy one. Later, this property was integrated into an ordinary differential equation for sparse recovery \citep{ORXYY16, huang2016split}, as continuous limits of Linearized Bregman Iteration (LBI) \citep{yin2008bregman} for image processing. Besides, they established the \emph{model selection consistency} in this ODE, \emph{i.e.}, the important features are selected first than others.

In this paper, we explore the total variation sparsity in the inverse scale space, termed as inverse scale variational sparsification, such that the TV-regularized images, as solutions of a newly proposed differential equation at early iterations, are able to contain only large-scale information. 

\section{Methodology}
\label{sec.method}

\textbf{Problem Setup.} Given a clean dataset $\{x_i,y_i\}_{i=1}^n$, our goal is to learn a predictor $f: \mathcal{X} \to \mathcal{Y}$, where $\mathcal{X}$ represents the image space and $\mathcal{Y}$ represents the label space, such that it can generalize well to new data that may be corrupted by various types of noise.

To achieve this goal, we present a unified framework to effectively extract large-scale features for training. In Sec.~\ref{sec.method-iss}, we first introduce a differential equation induced by Total Variation regularization, as an image sparsification method to obtain $x_i^L$ with only large-scale information, for each training data $x_i$. Then in Sec.~\ref{sec.method-train}, we introduce our training procedures for learning robust neural networks.

\subsection{ Inverse Scale Variational Sparsification} 
\label{sec.method-iss}

To smooth out small-scale features from an image $x \in \mathbb{R}^p$ ($p:=h\times w$ denotes the size of the image vector, with $h,w$ \emph{resp.} denoting the height and width), we consider the following variation problem known as Rudin-Osher-Fatemi (ROF) functional \citep{rudin1992nonlinear, osher2005iterative} that was proposed in image denoising, 
\begin{equation}
\label{eq.TV}
    \min_u \frac{1}{2} \Vert u - x \Vert_2^2 + \lambda \Vert u \Vert_{BV}, 
\end{equation}
where $\Vert \cdot \Vert_{BV} := \Vert \nabla u \Vert_1$ denotes the bounded variation norm. In the image space $\Omega \subset \mathbb{R}^2$, this norm can be expressed as $\Vert Du \Vert_1$, where $D$ is a graph difference matrix associated with a graph $G:=(V,E)$ with $V:=\{1,...,p\}$ and $E$ denoting the set of adjacent pixels pairs, such that $(Du)(i,j):=u_i - u_j$ for each $(i,j) \in E$. Here, $\lambda$ denotes the scale parameter. A large-scale parameter will smooth out those small-scale features, which refer to non-significant variational differences among $u$. By varying $\lambda$ from $\infty$ to $0$, Eq.~\eqref{eq.TV} generates a smoothed image path $\{u(\lambda)\}$, where $u(\lambda)$ with larger $\lambda$ is more variational sparse, and consequently, more small-scale features are being smoothed out. However, the optimization to obtain such an image path is very expensive, especially when $p$ is large \citep{chan1999nonlinear}. 

To improve the efficiency, we introduce the variable splitting scheme \citep{goldstein2009split} into the following objective:
\begin{equation}
\label{eq.obj-split}
    \mathcal{L}_\beta(u,\gamma):= \frac{1}{2} \Vert u - x \Vert_2^2 + \beta \Vert Du - \gamma \Vert_2^2, \ \beta > 0.
\end{equation}
Here, $\gamma$ is an augmented parameter constrained to exhibit sparsity and proximity to $Du$, where the latter constraint is achieved through the $\ell_2$ term $\frac{\rho}{2} \Vert Du - \gamma \Vert_2^2$. To enforce sparsity on $\gamma$, we consider the following dynamics: 
\begin{subequations}
\label{eq:bregman-iss} 
\begin{align}
	0 & =-\nabla_{u}{\mathcal{L}_\beta}\left(u_{t},\gamma_{t}\right),\label{eq:bregman-iss-show-a}\\
	\dot{\rho_{t}} & =-\nabla_{\gamma}{\mathcal{L}_\beta}\left(u_{t},\gamma_{t}\right),\label{eq:bregman-iss-show-b}\\
	\rho_{t} & \in \partial\Vert\gamma_{t}\Vert_1. \label{eq:bregman-iss-show-c}
\end{align}
\end{subequations} 
\begin{remark}
    When $\frac{1}{2} \Vert u - x \Vert_2^2$ becomes the squared loss $\frac{1}{2} \Vert y - Xu \Vert_2^2$ for the linear model,  ($y$, $X$ denotes the response vector and covariate matrix), our dynamics in Eq.~\eqref{eq:bregman-iss} degenerates to the Split Bregman Inverse Scale Space \citep{huang18_acha} that was proposed for signal recovery from noisy measurements. In contrast, our goal is to smooth scale scale information from a clean image.  
\end{remark}
Starting from $u(0) = 0$ and $\gamma(0) = 0$, such a differential equation generates a regularization solution path as $t$ increases, where $\gamma(t)$ changes from sparse to dense, leading to a smoothed image flow $\widetilde{u}_t$ starting from $\widetilde{u}_0 = \bf{0}_p$ to $\lim_{t \to \infty} \widetilde{u}_{t} = x$. Therefore, $t$ plays a similar role to $1/\lambda$, and hence is called \emph{inverse scale space parameter}. That means, the image at a small $t$ will preserve only large-scale features while smoothing those small-scale ones. 
\begin{remark}
    To understand in details why $t$ is called the \emph{inverse scale space parameter}, we consider the following objective with the scale space parameter $\lambda$:
    \begin{equation}
    \label{eq.lasso-tv}
        \mathcal{L}_{\beta,\lambda}(u,\gamma) := \frac{1}{2} \Vert u - x \Vert_2^2 + \beta \Vert Du - \gamma \Vert_2^2 + \lambda \Vert \gamma \Vert_1.
    \end{equation}
    As the scale parameter $\lambda$ decreases from $\infty$ to 0, $\gamma_{\lambda}$ in Eq.~\eqref{eq.lasso-tv} gets from sparse to dense, with $\lim_{\lambda \to \infty} \gamma_\lambda = 0$ and $\lim_{\lambda \to 0} (u_\lambda,\gamma_\lambda) = \arg\min_{u,\gamma} \mathcal{L}_\beta(u,\gamma)$ in Eq.~\eqref{eq.obj-split}. 
    During this process, the solution path $\gamma_{\lambda}$ will progressively learn finer scale features as $\lambda$ decreases, which is similar to the behavior of $\gamma_t$ in Eq.~\eqref{eq:bregman-iss} as $t$ increases. Therefore, $t$ plays a similar role to $1/\lambda$, hence is called the inverse scale space parameter. 
\end{remark}

To explain, we note from Eq.~\eqref{eq:bregman-iss-show-b} that $\rho_t$ follows a gradient descent flow, starting from $\rho_0 = 0$. This implies that $\gamma_0 = 0$, according to the definition of the subgradient in Eq.~\eqref{eq:bregman-iss-show-c}. As $t$ grows, more elements $\rho_t \in \partial \Vert \gamma_t \Vert_1$ tend to hit the boundary of $\pm 1$, making corresponding elements of $\gamma_t$ being non-zeros according to Eq.~\eqref{eq:bregman-iss-show-c}. Because $\gamma_t$ is sparse at each $t$, we can obtain a variational sparse image $\widetilde{u}_t$ by projecting $u_t$ onto the subspace expanded by the support set of $\gamma_t$, \emph{i.e.}, $\widetilde{u}_t := \mathrm{Proj}_{S_t}(u_t)$ with $S_t:=\mathrm{supp}(\gamma_t):=\{i:\gamma_t(i)\neq 0\}$. After projection, $D_{S_t^c}\widetilde{u}_t = 0$, indicating that that $\widetilde{u}_t$ smooth out those non-significant variational differences outside $S_t$. Consequently, as $t$ increases, $S_t$ expands, enabling $\widetilde{u}_t$ to learn finer-scale features. Therefore, we need a proper $t_0$ to stop the dynamics, ensuring that small-scale features are removed in $\widetilde{u}_{t_0}$.

\begin{remark}
    $t$ is a trade-off between robustness and accuracy in the standard setting. With access to noisy data, we can determine $t_0$ through cross-validation based on the reconstruction error. In a general setting where noisy data are not accessible, we determine it according to the sparsity level of the smoothed image. Empirically, we observe that our algorithm consistently achieves robustness across a diverse range of sparsity levels, provided that it remains below 0.9.
\end{remark}

\textbf{Discretization for Implementation.} We follow \citep{yin2008bregman, huang18_acha} to propose an iterative form of Eq.~\eqref{eq:bregman-iss}, by introducing an elastic net penalty $\Vert \gamma_t \Vert_1 + \frac{1}{2\kappa} \Vert \gamma_t \Vert_2^2$ to approximate the original $\ell_1$ penalty $\Vert \gamma_t \Vert_1$ by setting $\kappa$ large enough. Let $z_t \in \partial_\gamma \left( \Vert \gamma_t \Vert_1 + \frac{1}{2\kappa} \Vert \gamma_t \Vert_2^2 \right)$, we have the following iteration:
\begin{subequations}
\label{eq:lbi}
\begin{align}
\quad & u_{k+1} = u_{k} - \kappa\alpha \nabla_u \mathcal{L}_\beta(u_{k},\gamma_{k}), \label{eq:lbi-beta} \\
& z_{k+1} = z_{k} - \alpha \nabla_\gamma \mathcal{L}_\beta(u_{k},\gamma_{k}), \label{eq:lbi-z} \\
& \gamma_{k+1} = \kappa * \mathrm{prox}_{\Vert \gamma \Vert_1}(z_{k+1}), \label{eq:lbi-gamma}
\end{align}
\end{subequations}
where $\mathrm{prox}_{\Vert \gamma \Vert_1}(z_t) := \arg\min_{u} \frac{1}{2}\|u-z_t\|^2+\Vert u \Vert_1 = \mathrm{sign}(z_t) \max(|z_t|-1,0)$ gives an explicit form of $\gamma_k$ from $z_k$. Here, $\alpha$ is the step size to approximate the gradient. If $\alpha \to 0$ and $\kappa \to \infty$, the above iteration will converge to the original dynamics Eq.~\eqref{eq:bregman-iss}. Besides, $\alpha$ should satisfy $\alpha < \frac{2}{\kappa \Vert H_\nu \Vert_2}$ with $H_\nu:=\nabla^2 \mathcal{L}_\beta(u,\gamma)$, in order to ensure that $\mathcal{L}_\beta(u_k,\gamma_k)$ decrease as iterates. We term this iteration as \textbf{Vi}sion \textbf{Ro}bust \textbf{L}inearized \textbf{B}regman \textbf{I}teration (ViRoLBI). Compared to running Eq.~\eqref{eq.TV} for several $\lambda$, this iteration can easily obtain a whole smoothed image path with a single run of Eq.~\eqref{eq:lbi}.

To obtain a smoothed image $\widetilde{u}_k$ at each $k$, we project $u_k$ onto the subspace of the support set of $\gamma_k$, \emph{i.e.}, $S_k := \mathrm{supp}(\gamma_k) := \{i: \gamma_k(i) \neq 0\}$, such that $D_{S_k^c}\widetilde{u}_k=0$:
\begin{equation}
\label{eq.projection}
    \widetilde{u}_{k}=\mathrm{proj}_{S_k}(u_k) := \arg\min_{D_{S_k^c}u^\prime=0} \Vert u^\prime -u_k \Vert_2.
\end{equation}
We can obtain the closed-form solution of Eq.~\eqref{eq.projection} as: $\widetilde{u}_k = (I - D_{S_k^c}^\dagger D_{S_k^c})u_k$\footnote{For a general matrix $A$, we denote $A_S$ as the sub-matrix of $A$ with rows indexed by $S$}, where $D_{S_k^c}^\dagger$ denotes the pseudo-inverse matrix of $D_{S_k^c}$. However, the computation is expensive, as the complexity of $D_{S_k^c}^\dagger$ is at the scale of $\mathcal{O}\left(|S_k^c|^3\right)$, which can be significantly higher than the gradient descent step's cost of $\mathcal{O}(p)$, especially when $\gamma_k$ is sparse in the early iterations. To accelerate, we propose an efficient projection algorithm by exploiting the graph structure of $D_{S_k^c}$. Specifically, note that $D_{S_k^c}$ corresponds to the sub-graph $G_{S_k^c}:=(V,E_{S_k^c})$, such that 
\begin{align*}
    D_{S_k^c}(\widetilde{u})(i,j) := \widetilde{u}_k(i) - \widetilde{u}_k(j) = 0, \ \forall (i,j) \in E_{S_k^c}. 
\end{align*}
That means, we have $\widetilde{u}_k(i) = \widetilde{u}_k(j)$ as long as $i$ and $j$ are connected. We then propose to identify all connected components, since $\widetilde{u}$ shares the same value for all elements within each component. To minimize Eq.~\eqref{eq.projection}, this shared value should be the average of $u_k$ over elements in that component. Since the complexity of finding connected components of a $p$-node graph is $\mathcal{O}(p)$, this projection has the same complexity as the gradient descent, as summarized below.
\begin{prop}
    \label{prop.graph}
    Given $u_k$ and $S_k:=\mathrm{supp}(\gamma_k)$, if $G=(V,E_{S_k^c})$ has $C$ connected components $G_1=(V_1,E_1),...,G_C=(V_C,E_C)$, such that $V=V_1 \cup ... \cup V_C$, then $\widetilde{u}_k$ in Eq.~\eqref{eq.projection} can be determined as follows, with a complexity of $\mathcal{O}(p)$:
    \begin{equation*}
        \widetilde{u}_k(j) = \overline{u}_k(V_c):=\frac{1}{|V_c|} \sum_{l \in V_c} u_k(l), \ \forall j \in V_c\ \text{ for some $c\in \{1,..,C\}$}.
    \end{equation*}
\end{prop}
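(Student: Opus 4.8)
The plan is to first characterize the feasible set of the projection problem in Eq.~\eqref{eq.projection}, and then exploit its product structure to reduce the constrained least-squares problem to $C$ independent one-dimensional minimizations. I would begin by observing that the constraint $D_{S_k^c}u^\prime = 0$ is equivalent to requiring $u^\prime(i) = u^\prime(j)$ for every edge $(i,j) \in E_{S_k^c}$. Within a single connected component $V_c$, any two vertices are joined by a path whose consecutive vertices are adjacent, so chaining these equalities along the path forces $u^\prime$ to take a common value on all of $V_c$; conversely, any $u^\prime$ that is constant on each $V_c$ trivially satisfies every edge constraint. Hence the feasible set is exactly $\{u^\prime : u^\prime|_{V_c} \equiv a_c \text{ for some } a_c \in \mathbb{R}, \ c = 1,\dots,C\}$, a linear subspace of $\mathbb{R}^p$ parametrized by $(a_1,\dots,a_C)$.

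Next I would use the fact that $V = V_1 \cup \dots \cup V_C$ is a partition to split the objective. Writing $u^\prime(j) = a_c$ for $j \in V_c$, the squared norm factorizes as
\begin{equation*}
\Vert u^\prime - u_k \Vert_2^2 = \sum_{c=1}^{C} \sum_{j \in V_c} \big( a_c - u_k(j) \big)^2,
\end{equation*}
so that the variables $a_1,\dots,a_C$ decouple: each block depends only on its own $a_c$. Minimizing the inner sum $\sum_{j \in V_c}(a_c - u_k(j))^2$ over $a_c \in \mathbb{R}$ is the elementary one-dimensional least-squares problem whose unique minimizer is the sample mean $a_c^\star = \frac{1}{|V_c|}\sum_{l \in V_c} u_k(l) = \overline{u}_k(V_c)$, obtained by setting the derivative to zero. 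Since the reparametrized objective is a sum of strictly convex quadratics in the separate variables $a_c$, the collection $(a_1^\star,\dots,a_C^\star)$ is the unique global minimizer, which establishes the stated closed form $\widetilde{u}_k(j) = \overline{u}_k(V_c)$ for $j \in V_c$.

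Finally, for the complexity claim I would note that the connected components of $G = (V, E_{S_k^c})$ can be enumerated by a single breadth- or depth-first traversal in $\mathcal{O}(|V| + |E_{S_k^c}|)$ time, which is $\mathcal{O}(p)$ since the difference graph of an image is sparse with $\mathcal{O}(p)$ edges. Accumulating the per-component sums and sizes and then broadcasting each mean back to its vertices is a further single pass of $\mathcal{O}(p)$ work, giving the overall $\mathcal{O}(p)$ bound.

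I do not anticipate a genuine obstacle here; the only step requiring care is the two-directional argument that the feasible set coincides exactly with the component-wise constant functions, using connectivity for one inclusion and the edge constraints for the other. Once this identification is in place, the separability of the quadratic across the partition blocks and the classical fact that the mean minimizes the sum of squared deviations make the remaining computation entirely routine.
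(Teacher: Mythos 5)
Your proof is correct and follows essentially the same route as the paper's: identify the feasible set as the functions constant on each connected component of $G=(V,E_{S_k^c})$, decouple the quadratic across the partition $V=V_1\cup\dots\cup V_C$, and use the fact that the mean minimizes the sum of squared deviations. If anything, your write-up is slightly more complete than the paper's, since you explicitly verify both inclusions in the feasible-set characterization and justify the $\mathcal{O}(p)$ bound via a graph traversal on the sparse pixel-adjacency graph, whereas the paper states these steps more tersely.
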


\textbf{Extension to colored image via group sparsity.} For a colored image, we have $x \in \mathbb{R}^{p \times 3}$. This means each pixel is a 3-d vector $x_i = [x_{i1}, x_{i2}, x_{i3}]$ in the RGB channels. Correspondingly, we enforce group sparsity on $\gamma \in \mathbb{R}^{p \times 3}$, where each group $i$ corresponds to a vector $\gamma(i,) \in \mathbb{R}^3$: 
\begin{equation}
\negthickspace \negthickspace \negthickspace P(\gamma)=\Vert \gamma \Vert_{1,2}:=\sum_i \Vert\gamma(i,) \Vert_2 = \sum_i \sqrt{\gamma^2(i,1) + \gamma^2(i,2) + \gamma^2(i,3)}. 
\end{equation}
Let $z \in \partial_\gamma \left( P(\lambda) + \frac{1}{2\kappa} \Vert \gamma \Vert_2^2 \right)$, we obtain $\gamma_k$ from $z_k \in \mathbb{R}^{p \times 3}$ as follows: 

\begin{align}
    \gamma(i,)= \mathrm{prox}_{\Vert \gamma \Vert_{1,2}}(z)_i:=
\begin{cases}
    \left(1-\frac{1}{\Vert z(i,) \Vert_2} \right)z(i,) & \Vert z(i,) \Vert_2\ge 1, \\
0 & \text{otherwise,} 
\end{cases}
\end{align}
which can replace Eq.~\eqref{eq:lbi-gamma} to generate the smoothed image path for colored images.

\subsection{Robust Network Learning}
\label{sec.method-train}

We introduce two training strategies to learn robust neural networks: \emph{fixed training}, and \emph{iterative training}. Let $f_\theta : \mathcal{X} \to \mathcal{Y}$ as the neural network parameterized with $\theta$, where $\theta$ is typically trained via \emph{Empirical Risk Minimization} (ERM) with loss $\ell(f_\theta(x),y)$.

\textbf{Fixed training.} We directly train $f_\theta$ on smoothed images when the sparsity level of $\gamma$ (\emph{i.e.}, the proportion of non-zero elements of $\gamma$) reaches a fixed value, \emph{e.g.}, $80\%$. Since only large-scale information is preserved in data, we expect the trained network to smooth small-scale features. 

\textbf{Iterative training.} We train the network parameter $\theta$ and run ViRoLBI in an alternative manner, as shown below:
\begin{subequations} 
\label{eq:iterative}
\begin{align}
u_{k+1} & = u_{k}- \kappa\alpha \nabla_u \mathcal{L}_\beta(u_{k},\gamma_{k}), \nonumber \\
z_{k+1}&=z_{k}-\alpha \nabla_\gamma \mathcal{L}_\beta(u_{k},\gamma_{k}), \nonumber \\
\gamma_{k+1}&=\kappa* \mathrm{prox}_{\Vert \gamma \Vert_1}(z_{k+1}) \text{ from Eq.~\eqref{eq:lbi}}, \label{eq:iterative-lbi}\\
\widetilde{u}_{k+1} & = \mathrm{proj}_{\mathrm{supp}(\gamma_{k+1})}(u_{k+1}) \text{ from Prop.~\ref{prop.graph}}, \label{eq:iterative-proj} \\
\theta_{k+1}&=\theta_k-\nabla_\theta \ell(f_\theta(\widetilde{u}_{k+1},y)), \label{eq:iterative-grad}
\end{align}
\end{subequations}
where Eq.~\eqref{eq:iterative-grad} can be implemented by other optimizers such as SGD or Adam. As iterates, the network will first learn large-scale features, followed by small-scale ones. Instead of training networks for each scale-level data like fixed training, we can efficiently obtain a family of neural networks that progressively learn finer-scale features. If we stop at a proper iteration, the network will also learn only large-scale features. Empirically, both training procedures perform well on various types of noise, including natural corruption, adversarial noise, and low-resolution images. 

During testing, we can implement ViRoLBI to preprocess test data at a chosen sparsity level, in order to align the distribution to the training data. 

\vspace{-0.2cm}
\section{Experiments}
\label{sec.experiments}

\textbf{Applications}.
By effectively separating large-scale structural information from intricate details, our approach shows promise in enhancing robustness and explainability. Empirically, we show the scenarios with natural corruptions, adversarial attacks, and low-resolution images. In addition, we also show the potential of our framework in high-frequency perturbations. The visualization result is also provided.

\textbf{Datasets.} In order to demonstrate the generalization and scalability of our method, extensive benchmarks are adopted including CIFAR10 \citep{krizhevsky2009learning}, CIFAR100 and ImageNet100 \citep{russakovsky2015imagenet, zang2022dlme, pinasthika2024sparseswin}.
Their noisy variants are built using the same methods as in \citep{hendrycks2019benchmarking}, which contains different kinds of noisy and corrupted images, for noisy robustness testing.

We offer visualization of the regularized image path of instances from of ImageNet \citep{deng2009imagenet} and COCO Dataset \citep{lin2014microsoft} in Appx.~\ref{sec.more_vis} due to the space limit.

\textbf{Backbone.} We use ResNet18 for CIFAR10 and CIFAR100 and ResNet50 \citep{he2016deep} for ImageNet100 in our experiments. To further demonstrate the impact on the transformer model, we leverage the ViT-tiny \citep{dosovitskiy2020image} model for all datasets.

\textbf{Competitors.} (1) \textbf{Vanilla}. We train models on original clean images from datasets.  (2) \textbf{Blur}. We train models on images preprocessed by Gaussian-Blur. For images from CIFAR10 and CIFAR100, we use kernel size as 3, strength as 2. For images from ImageNet100, we use kernel size as 7, strength as 2. (3) \textbf{TV-layer}. We train models following \citep{yeh2022total}. This method is only applied to ResNet. (4) \textbf{Fix}. We train models on a fixed set of smoothed images generated by our method. (5) \textbf{Iterative}. We train models following strategy in Eq.~\ref{eq:iterative}. Specifically, for ViT-tiny and ResNet50, we finetune them with pretrain weight.

\textbf{Selection of Early Stopping Time.}  An essential component of our method is selecting the early stopping time based on the sparsity level, which increases over time. This selection is highly dependent on the image size. Empirically, for Fixed and Iterative training, we stop at a sparsity level of 0.6 for images from CIFAR10 and CIFAR100, and at a level of 0.3 for images from ImageNet100, as larger image sizes are more susceptible to natural corruptions.  We further discuss it in Appx. \ref{sec.more-with-sparsity}.

\subsection{Robustness Against Natural Corruptions}
\label{sec.noisy}

\begin{table}
\renewcommand\arraystretch{1.2}
\caption{Classification results on noisy examples. For each model, we display accuracy on test data without preprocessing (first row) and with preprocessing (second row).}
\label{tab:noisy_class}
\normalsize
\centering
\setlength{\tabcolsep}{3pt}
\resizebox{0.9\linewidth}{!}{
\begin{tabular}{@{}c|c|ccccc|cccc|ccc}
\toprule[2pt]
\multirow{2}{*}{Model} & \multirow{2}{*}{Noise} &  \multicolumn{5}{c}{CIFAR10} \vline & 
\multicolumn{4}{c}{CIFAR100} \vline & 
\multicolumn{3}{c}{ImageNet100} \\
\cline{3-14}
 &  & Vanilla & Blur & TV layer  & Fix & Iterative & Vanilla & Blur & Fix & Iterative & Vanilla & Fix & Iterative \\
 \hline
 \multirow{6}{*}{ResNet} & \multirow{2}{*}{Gaussian} & 45.90 & 11.57 & 49.97 & 23.91 & 33.20 & 7.35 & 1.12 & 1.56 & 3.05 & 40.31  & 35.72 & 39.96 \\
 &  & 72.57 & 61.91 & 76.15 & 75.34 & \textbf{78.46} & 18.83 & 10.30 & \textbf{22.28} & 17.01 & 45.33  & 45.39 & \textbf{48.15}\\
 \cline{2-14}
  & \multirow{2}{*}{Shot} & 59.08 & 13.48 & 62.14 & 33.41 & 42.39 & 6.65 & 1.11 & 1.56 & 2.59 & 36.51  & 32.10 & 37.36 \\
  &  & 76.34 & 70.14 & 78.39 & \textbf{83.93} & 83.69 & 15.28 & 9.31 & 25.11 & \textbf{28.72} & 37.82  & 33.67 & \textbf{39.26} \\
   \cline{2-14}
  & \multirow{2}{*}{Impulse} & 51.43 & 14.16 & 58.75 & 34.08 & 35.62 & 10.06 & 1.15 & 2.48 & 3.22 & 31.54 & 26.70 & 34.28 \\
  &  & 51.30 & 56.18 & 59.15 & 69.51 & \textbf{71.86} & 10.06 & 9.04 & 20.87 & \textbf{25.62} & 34.26  & 30.21 &  \textbf{39.53}\\
  \hline
   \multirow{6}{*}{ViT} & \multirow{2}{*}{Gaussian} & 58.75 & 28.32 & - & 48.25 & 50.41 & 6.58 & 2.07 & 5.43 & 4.40 & 35.42 & 40.87 & 42.29 \\
 &  & 67.58 & 68.59 & - & 80.19 & \textbf{81.07} & 13.23 & 15.09 & 23.68 & \textbf{23.74} & 48.17  & \textbf{52.94} & 48.39 \\
 \cline{2-14}
  & \multirow{2}{*}{Shot} & 68.04 & 37.77 & - & 60.60 & 58.66 & 5.73 & 2.14 & 5.02 & 3.95 & 31.77 & 38.84 & 38.76 \\
  &  & 71.29 & 73.90 & - & 82.11 & \textbf{82.66} & 11.06 & 13.20 & \textbf{21.66} & 21.38 & 36.64 & \textbf{44.38} & 40.35 \\
   \cline{2-14}
  & \multirow{2}{*}{Impulse} & 61.75 & 61.95 & - & 69.02 & 59.16 & 11.87 & 6.02 & 10.28 & 8.50 & 30.44  & 37.48 & 38.79 \\
  &  & 69.73 & 64.74 & - & \textbf{72.96} & 71.82 & 13.24 & 12.54 & 18.89 & \textbf{19.76} & 35.57 & \textbf{45.23} & 44.09\\
\bottomrule[2pt]
\end{tabular}}
\end{table}

We consider noisy images from CIFAR10-C, CIFAR100-C and ImageNet100-C \citep{hendrycks2019benchmarking}, specifically with Gaussian noise, shot noise and impulse noise. For each noise, we consider 5 severity for CIFAR10 and CIFAR100, and 2 severity for ImageNet100, reporting the average accuracy. To explain the efficacy of our proposed method when dealing with noisy images, we compare our model with the vanilla model, Blur, and TV Layer on CIFAR10-C, CIFAR100-C, and with the vanilla method on ImageNet100-C to demonstrate the potential of our method on large-scale datasets. In the test stage, we consider two scenarios for all the methods: with and without preprocessing (preprocess test images via our instance smoothing algorithm in Eq.~\ref{eq:lbi} with sparsity 0.6 for CIFAR10 and CIFAR100, 0.3 for ImageNet100). For the Blur model, we blur the test image as preprocessing.

We present the classification accuracy in Tab.~\ref{tab:noisy_class}. Our models outperform other baselines across almost all noise types and datasets, even without the benefit of preprocessing. Additionally, our model stands out by achieving further improvement over others during the testing phase, where it refines the small-scale information through preprocessing to enhance performance. Furthermore, when employed as a preprocessing technique, our sparsification framework significantly enhances the accuracy of nearly all models across various types of noisy data.

\subsection{Robustness against Adversarial Attack}
\label{sec.exp-adv}

In this section, we show the robustness of our method against adversarial attacks. The attacked data are generated via commonly-used FSGM \citep{goodfellow2014explaining} and PGD \citep{madry2017towards}, whose details can be referred to Sec.~\ref{sec.more_adv}. During the test stage, similar to Appx.~\ref{sec.noisy}, we smooth each data at a sparsity level of 0.6 for CIFAR10 and CIFAR100, and 0.3 for ImageNet100.

We report the accuracy at strengths {$\varepsilon={8}/{255}$ and $\varepsilon={16}/{255}$} of all the datasets in Tab.~\ref{tab:adv1}, where $\varepsilon$ stands for the attack strengths on normalized images. Apart from these results, we additionally report the result of PNI \citep{he2019parametric} with ResNet18 as the backbone and $\varepsilon={8}/{255}$. 
We first note that for all methods, applying our variational sparsification framework to preprocess test data can bring significant robustness improvement, which suggests its utility in smoothing noise components. Besides, it is also interesting to see that all variants of our methods can outperform the Vanilla method by a large margin, which can further demonstrate the utility of our robust learning framework.

\begin{table}
\renewcommand\arraystretch{1.2}
\caption{Classification results on adversarial examples. For each model, we display accuracy on test data without preprocessing (first row) and with preprocessing (second row).}
\label{tab:adv1}
\normalsize
\centering
\setlength{\tabcolsep}{3pt}
\resizebox{0.8\linewidth}{!}{
\begin{tabular}{@{}c|c|ccccc|ccccc|ccc}
\toprule[2pt]
\multirow{2}{*}{Model} & \multirow{2}{*}{Strength} &  \multicolumn{5}{c}{CIFAR10} \vline & 
\multicolumn{5}{c}{CIFAR100} \vline & 
\multicolumn{3}{c}{ImageNet100} \\
\cline{3-15}
 &  & Vanilla & Blur & PNI & Fix & Iterative & Vanilla & Blur & PNI & Fix & Iterative & Vanilla  & Fix & Iterative \\
 \hline
 \multirow{4}{*}{ResNet} & \multirow{2}{*}{8/255} & 37.95 & 3.66 & 41.07 & 21.04 & 18.67 & 2.15 & 0.55 & 20.78 & 1.57 & 2.21 & 21.18 & 5.10 & 18.22 \\
 &  & 49.45 & 60.22 & 51.21 & \textbf{62.72} & 60.39 & 14.61 & 36.57 & 23.06 & \textbf{36.73} & 33.06 & 24.41 & \textbf{44.82} & 32.00 \\
 \cline{2-15}
  & \multirow{2}{*}{16/255} & 23.97 & 5.10 & 26.05 &  11.37 & 12.87 & 1.69 & 0.55 & 8.26 &  0.93 & 1.25 & 15.82 & 8.20 & 17.32 \\
  &  & 42.27 & 25.91 & 43.35 & \textbf{48.51} & 45.51 & 9.36 & 12.73 & 13.39 & \textbf{23.16} & 19.78 & 21.32 & \textbf{27.30} & 22.34 \\
  \hline
  \multirow{4}{*}{ViT} & \multirow{2}{*}{8/255} & 16.63 & 14.62 & - & 24.30 & 12.76 & 6.74 & 3.46 & - & 5.18 & 3.57 & 4.64 & 1.14 & 0.82 \\
  &  & 40.81 & 51.95 & - &  \textbf{64.33} & 58.34 & 
     12.29 & 33.18 & - &  \textbf{37.42} & 32.22 & 13.46 & \textbf{33.34} & 11.72\\
 \cline{2-15}
     & \multirow{2}{*}{16/255} & 16.48 & 12.58 & - & 23.20 & 16.96 & 6.18 & 2.26 & - & 4.76 & 3.09 & 3.48 & 0.48 & 0.38 \\
 &  & 31.10 & 37.44 & - & \textbf{51.63} & 45.27 & 9.75 & 20.40 & - & \textbf{26.65} & 21.56 & 8.06 & \textbf{20.54} & 6.10 \\
\bottomrule[2pt]
\end{tabular}}
\end{table}

\subsection{Robustness against Low Resolution}
\label{sec.exp-res}

To illustrate the robustness of our method against low-resolution data, we apply our method to the task of classifying low-resolution images. We first downsample the original images and then upsample to the original size via the nearest interpolation. The smaller intermediate size will result in a lower-resolution image. 

The results are presented for different models with various datasets in Tab.~\ref{tab:res_res} for test data with different scaling factors. As shown, all variants of our methods outperform the vanilla model with lower-resolution images, especially the Fixed training model. This result suggests the effectiveness of our sparsification framework in learning large-scale information during training, as the low-resolution images can smooth out the details while maintaining the object's shape and contour. 

\begin{table}
\renewcommand\arraystretch{1.2}
\caption{Classification results on low-resolution examples. "IN100" refers to the ImageNet100 dataset, and "scale factor" denotes the ratio of the compressed image size to the original size.}
\label{tab:res_res}
\normalsize
\centering
\setlength{\tabcolsep}{3pt}
\resizebox{0.8\linewidth}{!}{
\begin{tabular}{c|cc|cc|cc|cc|cc|cc}
\toprule[2pt]
Method & \multicolumn{6}{c}{ResNet} \vline & \multicolumn{6}{c}{ViT} \\
\hline
Dataset & \multicolumn{2}{c}{CIFAR10} & \multicolumn{2}{c}{CIFAR100} & \multicolumn{2}{c}{IN100} \vline & \multicolumn{2}{c}{CIFAR10} & \multicolumn{2}{c}{CIFAR100} & \multicolumn{2}{c}{IN100} \\
\hline
Scale Factor & 1/4 & 1/2 & 1/4 & 1/2 & 1/7 & 1/4 & 1/4 & 1/2 & 1/4 & 1/2 & 1/7 & 1/4 \\
\hline
Vanilla & 23.33 & 38.50 & 5.25 & 19.48 & 10.16 & 28.60 & 16.54 & 42.27 & 3.83 & 17.43 & 6.53 & 24.18 \\
\hline
Blur & 29.69 & 33.03 & 6.43 & 8.30 & 8.68 & 16.82 & 25.32 & 62.76 & 4.82 & 20.75  & 10.94 & 21.36 \\
\hline
Fix & 30.00 & 41.12 & 7.97 & 14.53 & \textbf{13.24} & 29.53 & 24.52 & 64.64 & 6.90 & 25.58 & 12.28 & 25.76 \\
\hline
Iterative & \textbf{32.00} & \textbf{48.68} & \textbf{13.13} & \textbf{23.33} & 11.50 & \textbf{31.16} & \textbf{27.95} & \textbf{65.25} & \textbf{9.98} & \textbf{26.92} & \textbf{17.52} & \textbf{26.52} \\

\bottomrule[2pt]
\end{tabular}}
\end{table}

\subsection{Extension to High-frequency Perturbations}

To further demonstrate the capability of our method in defending general noise, we apply our method to high-frequency perturbed data, by following the scenario of \citep{wang2020high}. Specifically, we first decompose the images into low-frequency and high-frequency components as shown in Fig.~\ref{fig:hl_comp} (a), and then respectively test the accuracy of models on both the high and low-frequency components. 

\begin{figure}
\centering
\includegraphics[width=0.7\textwidth]{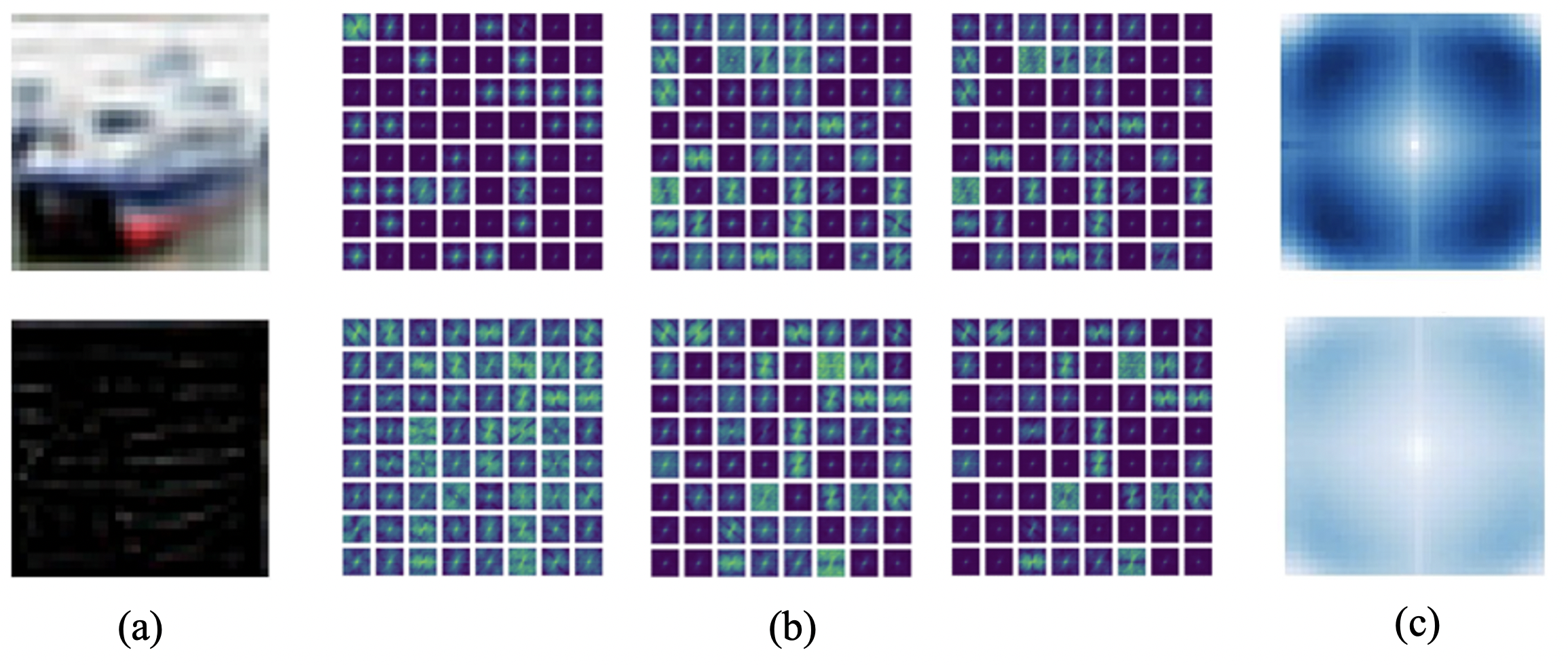}
\caption{\textbf{(a)} An example from CIFAR10 with a cut-off radius $r=6$ (above: low frequency component; below: high frequency component). \textbf{(b)} The feature map of the first convolution layer of ResNet (above: Iterative; below: Vanilla) in Epoch 9 (left), Epoch 59 (middle), and Epoch 159 (right). \textbf{(c)} The expected difference in the frequency domain on CIFAR10: the top (\emph{resp.} bottom) row shows the difference between the original image and the one with sparsity 0.6 (\emph{resp.} 0.8). \label{fig:hl_comp}  }

\vspace{-0.15in}
\end{figure}

As indicated in Tab.~\ref{tab:freq}, our models perform better than the vanilla model on low-frequency components, suggesting the capability of our framework to smooth high-frequency information. As a further verification, we visualize the frequencies in the first layer's feature maps during training in Fig.~\ref{fig:hl_comp} (b). As shown, the vanilla model (bottom) tends to learn high-frequency features while our method can first learn low-frequency features and then high-frequency features during training. Apart from that, in Fig.~\ref{fig:hl_comp} (c) we visualize the expected difference in the frequency domain as proposed in \citep{yin2019fourier}. To be concrete, we calculate $\mathbb{E}(\mathcal{F}(X) - \mathcal{F}(\hat{X}))$ on CIFAR10, where $\mathcal{F}$ stands for Fourier transformation, $X$ and $\hat{X}$ stand for different images. One can find that if we stop at a higher sparsity level, more high-frequency features are learned. With early stopping, the images contained more low-frequency features, which further supports the ability of our method to preserve low-frequent information while smoothing some high-frequency ones. 

\begin{table}
\renewcommand\arraystretch{1.2}
\caption{Test accuracy on high and low-frequency components of images.}
\label{tab:freq}
\normalsize
\centering
\setlength{\tabcolsep}{3pt}
\resizebox{0.7\linewidth}{!}{
\begin{tabular}{@{}c|c|cccc|cccc}
\toprule[2pt]
\multirow{2}{*}{Model} & \multirow{2}{*}{Strength} &  \multicolumn{4}{c}{CIFAR10} \vline & 
\multicolumn{4}{c}{CIFAR100}  \\
\cline{3-10}
 &  & vanilla & Blur & fix & iterative & vanilla & Blur & fix & iterative  \\
 \hline
 \multirow{2}{*}{ResNet} & High & \textbf{41.05} & 17.94 & 20.93 & 11.08 & \textbf{4.64} & 1.97 & 2.01 & 1.98 \\
 & Low & 47.37 & \textbf{93.14} & 73.29 & 73.63 & 13.69 & \textbf{72.91} & 41.96 & 48.30 \\

 \hline
  \multirow{2}{*}{ViT} & High & \textbf{35.90} & 29.23 & 33.09 & 19.67 & \textbf{8.75} & 3.61 & 4.46 & 4.07 \\
  & Low & 84.04 & \textbf{93.11} & 82.96 & 84.97 & 25.41 & \textbf{76.88} & 54.61 & 57.94 \\

\bottomrule[2pt]
\end{tabular}}
\vspace{-0.15in}
\end{table}

\subsection{Visualization}
\label{sec.model_exp}

We visualize learned features via Layer-Wise Relevance Propagation (LRP)~\citep{bach2015pixel}, which indicates the importance of features by backpropagating the relevance and overlaying the normalized results onto the input image. Appx.~\ref{sec.more-exp} presents more results implemented via Grad-CAM visualization \citep{selvaraju2017grad}.

The results regarding three image variants including original images, blurred images, and sparse images generated by our algorithm are shown in Fig.~\ref{fig:lrp}, where brighter colors typically indicate higher importance.

It is evident that on the original image, all the models can capture the main object (the bird), but the pixel in the background is important. After being preprocessed with our method, all the models can focus more on the main object without overly relying on the texture on the background.

\begin{figure}[htb]
\centering
\includegraphics[width=\textwidth]{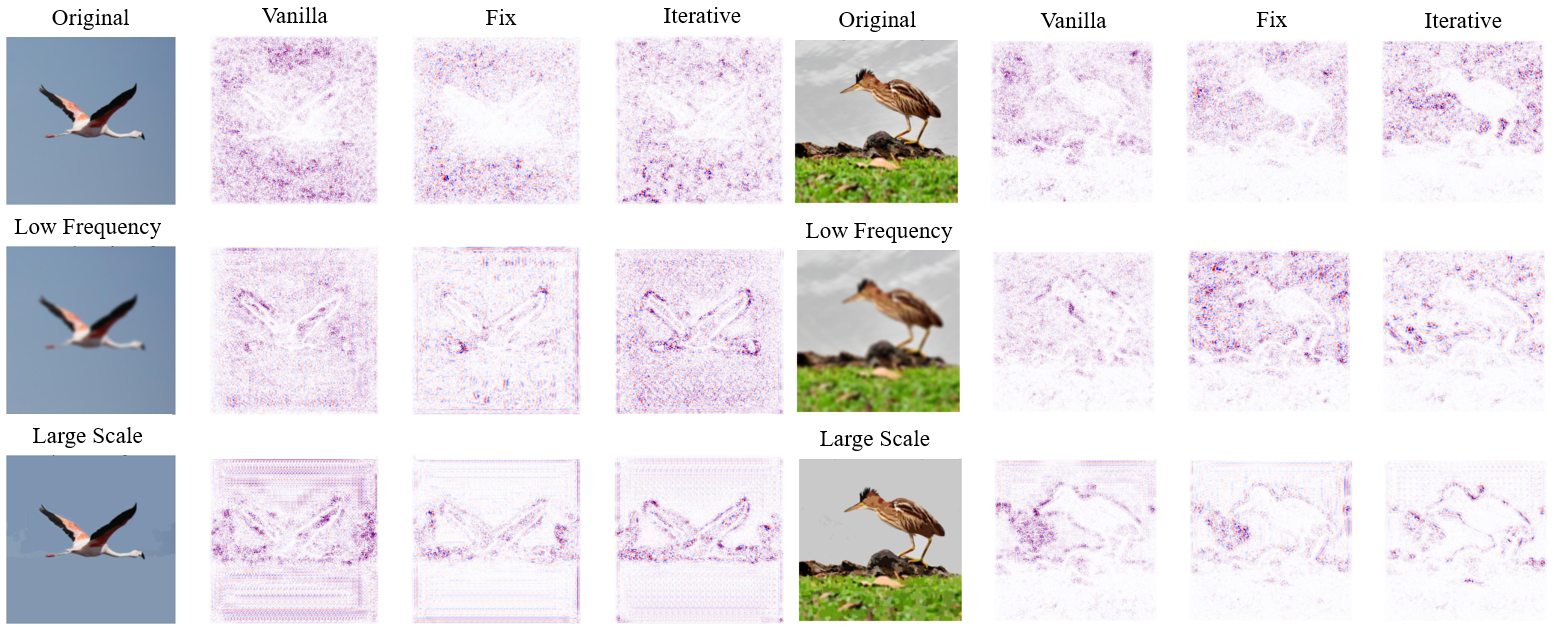} 
\caption{The LRP result of two images from ImageNet100. The first, second, and third rows show results for original images, Gaussian blurred images (containing only low-frequency components), and variational sparse images (containing only large-scale information), respectively. Brighter colors typically indicate higher importance. \label{fig:lrp} }

\vspace{-0.15in}
\end{figure}

\section{Conclusions and Discussions}
\label{sec.conclusion}

We propose a variational sparsification algorithm that exploits the Total Variational sparsity in the inverse scale space. By employing early stopping, this method efficiently smooths out small-scale features where noise typically occurs. Besides, it can effectively preserve important high-contrast features. With discretization and sparse projection, it has a simple iterative algorithm to implement. We demonstrate the utility in several robustness tasks.

\textbf{Limitations.} Although the complexity of sparse projection is comparable to gradient descent, its running time is significantly longer due to the current CPU-only implementation of the sparse projection algorithm. We believe our method can be applied to feature maps with TV regularization. Future work will explore this extension and optimize memory usage.

\clearpage
\bibliographystyle{unsrt}
\bibliography{reference}

\newpage
\appendix
\section{Proof of Proposition \ref{prop.graph}}
\label{sec.proof}

\textbf{Notations.} We project $\beta_k$ onto the sparse subspace of $\gamma_k$, \emph{i.e.}, $S_k := \mathrm{supp}(\gamma_k)$: $\tilde{\beta}_{k}=\mathrm{proj}_{S_k}(\beta_k) := \arg\min_{D_{S_k^c}\beta^\prime=0} \Vert \beta^\prime -\beta_k \Vert_2$. We denote $D_{S_k^c}$ as the sub-matrix of $D$ with rows indexed by $S_k^c$, which indexes the nonzero elements of $\gamma_k$. Specifically, $D_{S_k^c}$ is the graph difference matrix of $G:=(V,E_{S_k^c})$, such that $(i,j) \in E_{S_k^c}$ if $D_{S_k^c}\tilde{\beta}_k(i,j) := \tilde{\beta}_k(i) - \tilde{\beta}_k(j) = 0$. $E$ is the corresponding edge set of $D$, which is composed of adjacent pairs of pixels. 

\begin{proof}[Proof of Prop.~\ref{prop.graph}]
    Suppose $G=(V,E_{S_k^c})$ has $C$ connected components $G_1=(V_1,E_1),...,G_C=(V_C,E_C)$, such that $V=V_1 \cup ... \cup V_C$. If two nodes $i$ and $j$ are in the same component, the corresponding elements of $\tilde{\beta}_k$ have the same value, \emph{i.e.}, $\tilde{\beta}_k(i) = \tilde{\beta}_k(j)$. Then for each component $V_c$, $\tilde{\beta}_k(V_c)$ shares the same value. If we denote it as $\eta_c$, then the $\eta_c$ to minimize 
\begin{align*}
    \sum_{j \in V_c} (\eta_c - \beta_k(j))^2,
\end{align*}
equals to the average of $\beta_k(V_c)$, \emph{i.e.}, $\mathrm{mean}(\beta_k(V_c))$. Using the strong connected-component algorithm proposed in \citep{lulli2016fast}, the decomposition of connected components will cost $\mathcal{O}(log(p))$. 
\end{proof}

The algorithm is shown in Alg.~\ref{alg.graph}, and the flowchart of the graph algorithm is shown in Fig.~\ref{fig.graph_alg}.

\begin{algorithm}[ht]
\caption{Projection by Connected Components in Graph}
  \label{alg.graph}
\begin{algorithmic}
  \STATE {\bfseries Input:} An image $\beta$, current $\gamma_t$, the graph $G(V, E)$ where $V$ denotes the set of pixels and $E$ contains edges defined according to the graph difference matrix $D$ in Eq.~\eqref{eq.TV}.
  \STATE {\bfseries Output:} $\tilde{\beta}$ via projection in Eq.~\eqref{eq.projection}.
  \STATE Find connected components $G_1:=(V_1, E_1), \dots, G_C:=(V_C, E_C)$. 
  \STATE For each $i=1,...,C$, compute the average of $\beta$ over $V_i$, \emph{i.e.}, $z_i := \sum_{j \in V_i} \beta(j) / |V_i|$ and take $\tilde{\beta}(j)=z_i$ for each $j \in V_i$. 
  \STATE {\bfseries Return:} $\hat{\beta}$.
\end{algorithmic}
\end{algorithm}

\begin{figure}[ht!]
\centering
\includegraphics[width=\textwidth]{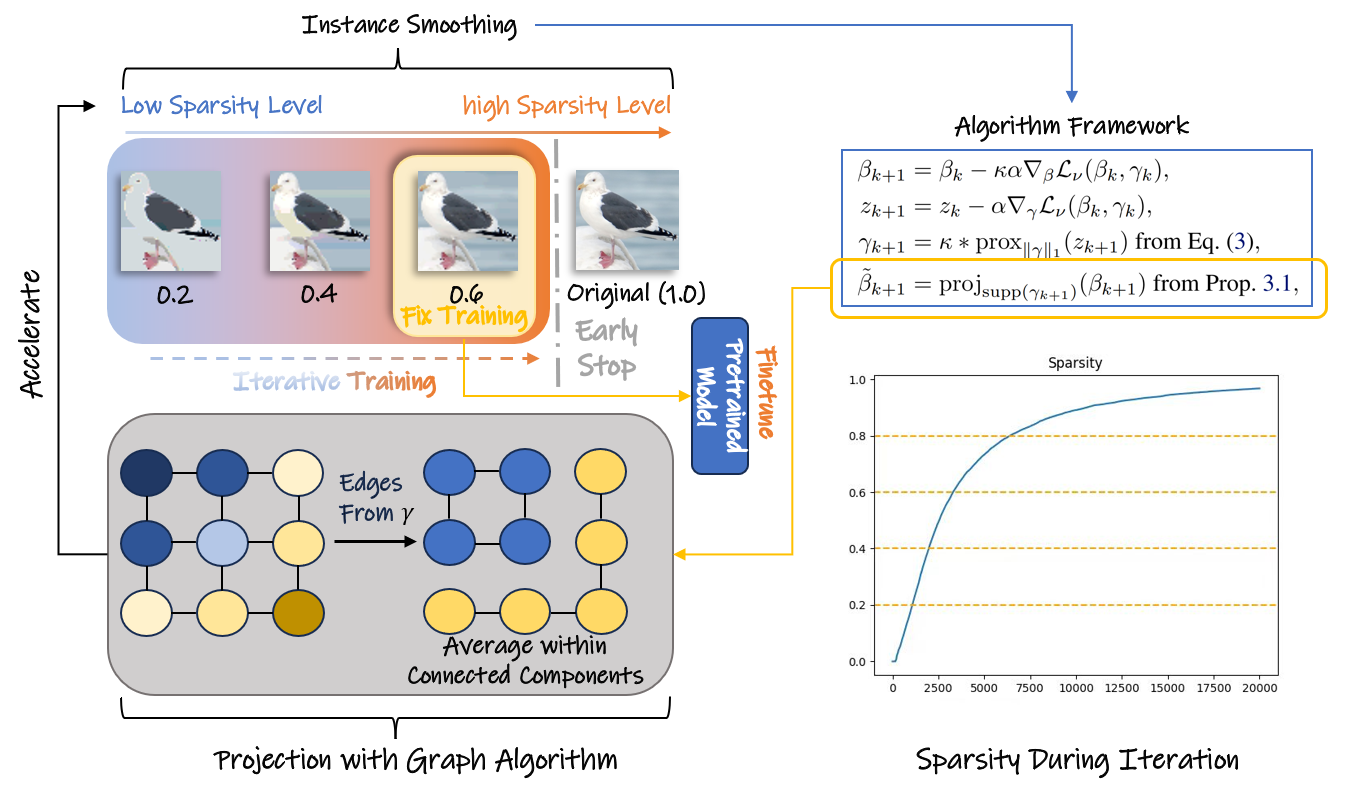}
\caption{Illustration of our training procedure and the Graph Algorithm used in acceleration.}
\label{fig.graph_alg} 
\end{figure}

\begin{remark}
    After obtaining the connected components, we need to compute the average of each component, which has the complexity of $\mathcal{O}(p)$ and is comparable to the gradient descent. Since the complexity of the soft-thresholding in Eq.~\eqref{eq:lbi-gamma} is also $\mathcal{O}(p)$, the overall complexity of our instance smoothing algorithm in Eq.~\ref{eq:lbi} has the same order of the gradient descent. 
\end{remark}

\section{Implementation Details}

We use ResNet18 for CIFAR10 and CIFAR100 and ResNet50 \citep{he2016deep} for ImageNet100. To further demonstrate the impact on the transformer model, we leverage the ViT-tiny \citep{dosovitskiy2020image} model for all the datasets. Specifically, we use the pre-trained ResNet50 and ViT-tiny \footnote{The weights are available in \url{https://huggingface.co/WinKawaks/vit-tiny-patch16-224}.} and further fine-tune them on our dataset, also with fixed, iterative, and incremental training strategies. For hyperparameters, we set $\kappa=5$, $\nu=1$, and $\alpha = \frac{1}{\kappa \Vert H \Vert_2}$, where $H=\nabla^2\mathcal{L}_\nu$ is the Hessian matrix of the loss function. We denote "Vanilla" for the vanilla model, "Fix" for the fixed training model, and "Iterative" for the iterative training model.

\section{Classification on clean data}
\label{sec.more-with-sparsity}

In this experiment, we apply our method to the standard classification on CIFAR10 and ImageNet100 \citep{deng2009imagenet}. 

\textbf{Experiment Setup. } We adopt ResNet-18 as the backbone on CIFAR10 and ResNet-50 on ImageNet100. For fixed training, we try different sparsity levels for preprocessing images. For the iterative training, we use the strategy in Eq.~\ref{eq:lbi} from the sparsity level 0.3 to 0.8.

\textbf{Results.} We report the classification accuracy on CIFAR10 In Tab.~\ref{tab:more-cls-10}. As shown, both fixed and iterative training offer comparable results to the vanilla model after the sparsity reaches $0.6$, suggesting that the loss in information is limited. Moreover, fixed training slightly outperforms the vanilla model at the sparsity level of $0.9$. We also report the classification accuracy on ImageNet100 In Tab.~\ref{tab:img100}.

\textbf{Further Discussion of Sparsity Level. } As shown in results in Tab.~\ref{tab:more-cls-10} and Tab.~\ref{tab:img100}, images with the sparsity level higher than 0.6 capture important features for classification. Also, from the results in Sec.~\ref{sec.noisy}, ~\ref{sec.exp-adv}, ~\ref{sec.exp-res}, we can empirically notice that training with those images can improve the robustness of the model. 

When the images are processed with a sparsity level of 0.6, the fine-grained small-scale information has been eliminated, while keeping the structural information. When the sparsity level is lower, only shape information (smooth information) is maintained, but some detailed semantic information. However, when the sparsity level is close to 1.0, noise and confusing texture will show up, which will deteriorate the robustness.

\begin{table}
\centering
\caption{Results of clean data in CIFAR10.}
\label{tab:more-cls-10}
\resizebox{\linewidth}{!}{
\begin{tabular}{c|cccccccccc|c}
\toprule[2pt]
Sparsity Level & 0.4 & 0.5 & 0.6 & 0.7 & 0.8 & 0.9 & 1.0 (Vanilla) & Iterative \\ \midrule
Accuracy & $85.24\pm 1.69$ & $92.47\pm 1.53$ & $93.82\pm 0.68$ & $94.53\pm 0.27$ & $94.78 \pm 0.24$ &$\textbf{95.38}\pm 0.14$& $95.29\pm 0.06$ & $94.54\pm 0.14$ \\
\bottomrule[2pt]
\end{tabular}}
\end{table}

\begin{table}
\centering
\caption{{Results of clean data in ImageNet100.}}
\label{tab:img100}
\resizebox{0.5\linewidth}{!}{
\small
\begin{tabular}{c|ccccccc|c}
\toprule[2pt]
Sparsity Level &0.4 & 0.6 & 0.8  & 1.0 (Vanilla) & Iterative \\ \midrule
Accuracy  & $59.12 $ & $75.51$ &${78.66}$& $79.36$& $74.39$ \\
\bottomrule[2pt]
\end{tabular}}
\end{table}

\section{More Visualization Results}
\label{sec.more-exp}

\subsection{Visualizations with Grad-CAM}
In this experiment, we apply the Grad-CAM \citep{selvaraju2017grad} to visualize learned features during \emph{iterative training}. We consider the model trained using the strategy described in Eq.~\ref{eq:bregman-iss}, with sparsity levels ranging from 0.3 to 0.8. As shown in Fig.~\ref{fig:grad_cam}, the features learned by our model in the early epochs are more concentrated on the class-dependent regions (\emph{e.g.}, the cat's face in the top-left image and the dog`s body in the bottom left image). As iterates, finer-scale information is learned; thus the feature map is enlarged due to the completeness of information. The larger saliency map of our model shows that our model learns more shape information than the vanilla model.

\begin{figure}[ht!]
\centering
\includegraphics[width=\textwidth]{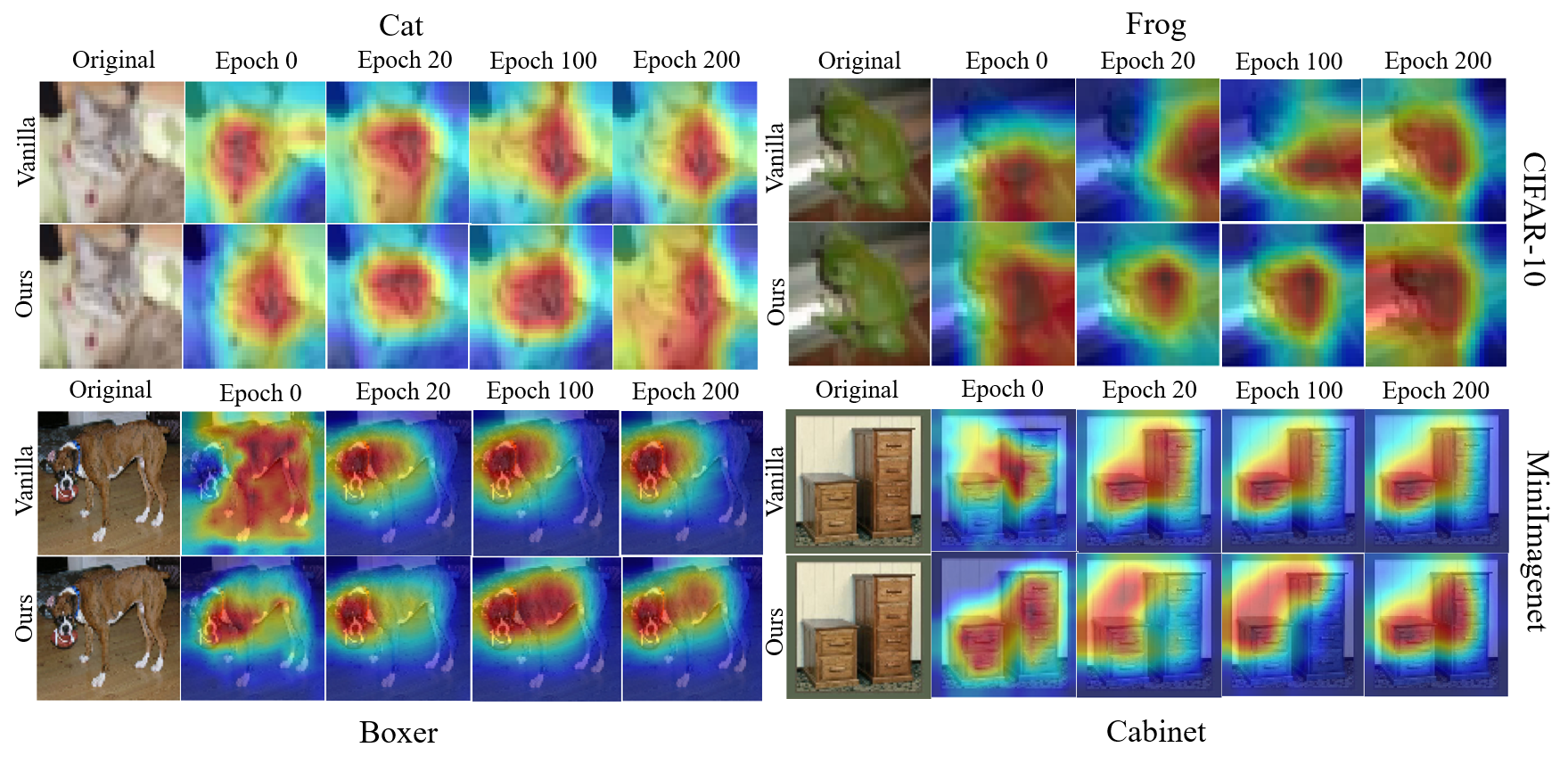}
\caption{Visualization of learned features in four images: cat (top-left), Boxer (bottom-left), Frog (top-right), and Cabinet (bottom-right) during \emph{iterative training}. The top two are from CIFAR-10 and the bottom two are from miniImagenet. In each image, the top and the bottom rows respectively correspond to the vanilla model and our method in Eq.~\eqref{eq:iterative}. \label{fig:grad_cam} }
\end{figure}

\section{Running Time of the Algorithm}
In this experiment, we compare the running time of our algorithm on gray-scale images from miniImagenet dataset to sparsity level 0.6. We consider the matrix factorization method and our graph method for the sparse projection in Eq.~\ref{eq:iterative-proj}. We run this test on an NVIDIA Tesla V100 (32GB) and an Intel Gold 6240 CPU @ 2.60GHz.

\textbf{Results.} For other methods such as Singular value decomposition (SVD) decomposition or QR decomposition that can obtain the closed-form solution suffer from high computational costs. Assume $p$ is the dimension of $\beta_k$. For example, the complexity of SVD decomposition is $\mathcal{O}(p^3)$, which is much more expensive than the gradient descent. In contrast, the complexity of the graph projection is only $\mathcal{O}(p)$.  To illustrate, we compare our graph projection methods with other alternatives, as well as the gradient descent in terms of time complexity. We report the running time for 15,000 iterations on a 84x84 grayscale image, in Tab~\ref{tab:time}. As shown, our graph projection method is much more efficient than others. 

\begin{table}
\tiny
\renewcommand\arraystretch{0.5}
\centering
\caption{Computational time ($s$) of different methods for 15,000 iterations on a 84x84 grayscale image.}
\label{tab:time}
\resizebox{0.6\linewidth}{!}{
\begin{tabular}{ccccc}
\toprule[1pt]
Projection Method & SVD & LSQR & Graph Algorithm \\
\cmidrule{1-4}
Running Time (s)   & $373.24\pm1.41$ & $171.79\pm2.31$ & $4.87\pm 0.09$  \\
\bottomrule[1pt]
\end{tabular}}
\end{table}

\section{Total Variation regularized Image Path}
\label{sec.more_vis}

In this experiment, we choose more cases in ImageNet  \citep{deng2009imagenet} and COCO Dataset \citep{lin2014microsoft}, a multi-object image dataset to visualize the regularized image path.

\textbf{Results.} 
As shown in Fig.~\ref{fig.path_shape}, as the sparsity level increases, the image first identifies large-scale structural information and then small-scale detailed information. Such large-scale information can refer to the object's shape or contour in the first three rows where the object as a whole has a convex and smoothed boundary; while in the last three rows with irregular and complex contour, such structural information can refer to the key parts of the object, \emph{e.g.}, the plow of a plow truck in the fifth row, and umbrellas in the last row. 

As shown in Fig.~\ref{fig.sol_path_coco}, when our method meets multi-object images, the shape of the object in the images will pop out at the beginning of the image path, and more detailed texture will gradually add to the background, and the object smoothly.

\begin{figure}[ht!]
  \centering
  \begin{tabular}{ccccc}
    \includegraphics[width=0.16\linewidth]{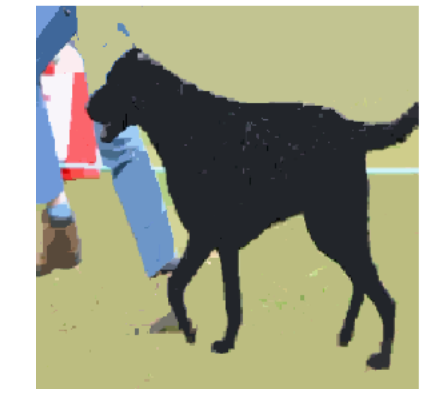} &
    \includegraphics[width=0.16\linewidth]{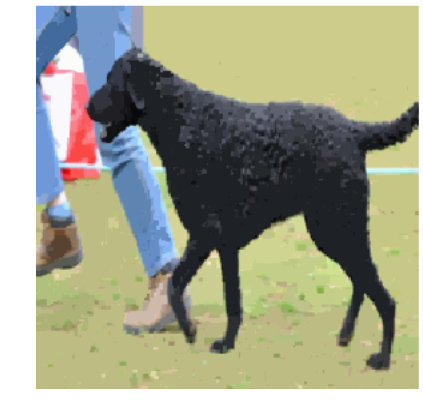} &
    \includegraphics[width=0.16\linewidth]{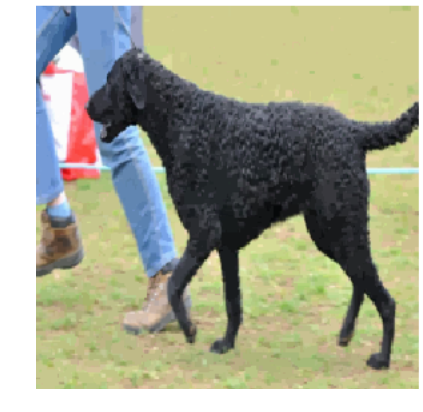} &
    \includegraphics[width=0.16\linewidth]{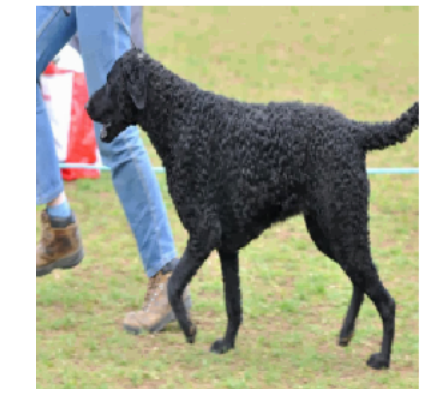} &
    \includegraphics[width=0.16\linewidth]{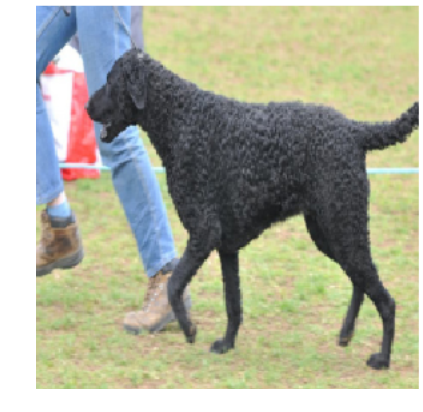} \\
    \includegraphics[width=0.16\linewidth]{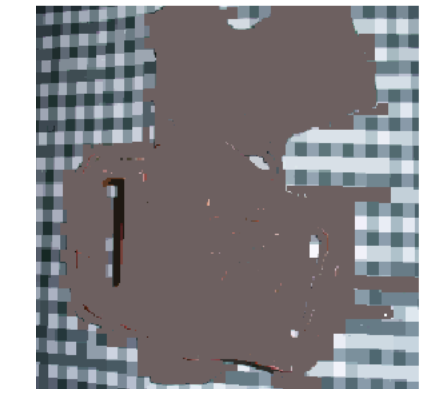} &
    \includegraphics[width=0.16\linewidth]{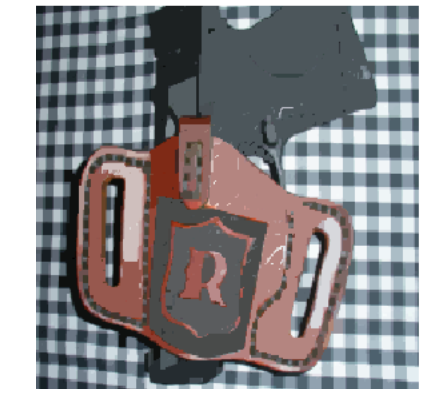} &
    \includegraphics[width=0.16\linewidth]{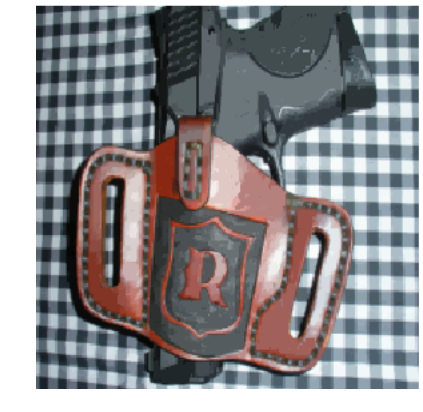} &
    \includegraphics[width=0.16\linewidth]{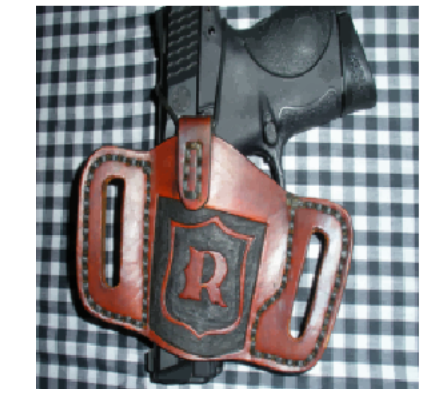} &
    \includegraphics[width=0.16\linewidth]{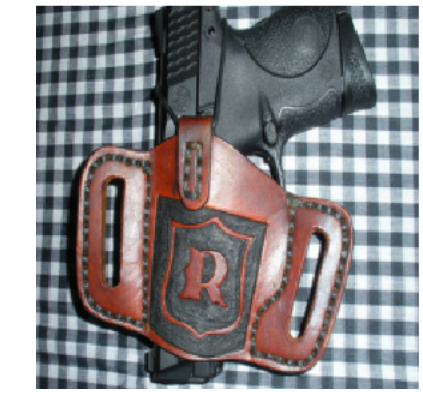} \\
    \includegraphics[width=0.16\linewidth]{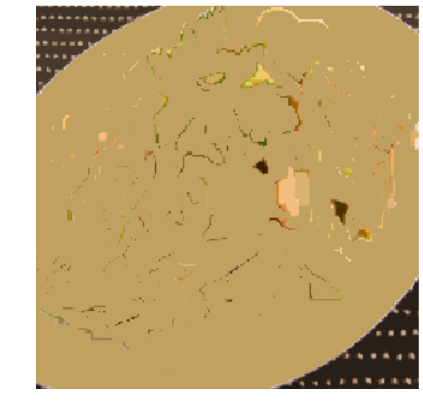} &
    \includegraphics[width=0.16\linewidth]{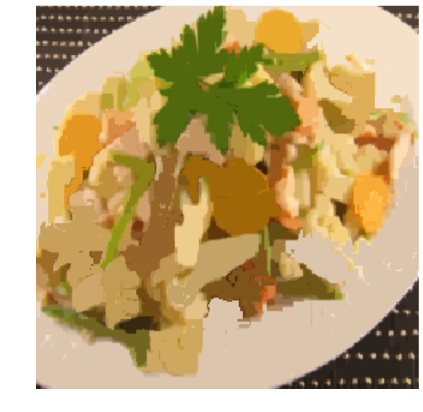} &
    \includegraphics[width=0.16\linewidth]{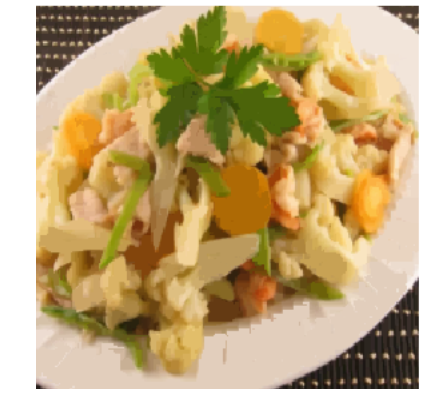} &
    \includegraphics[width=0.16\linewidth]{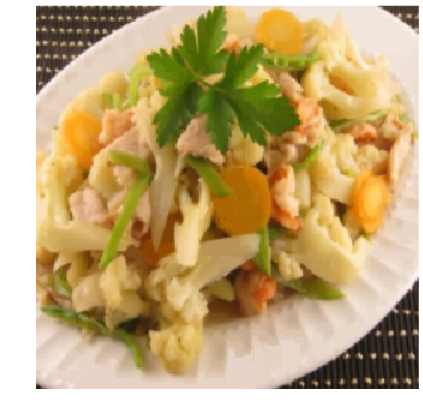} &
    \includegraphics[width=0.16\linewidth]{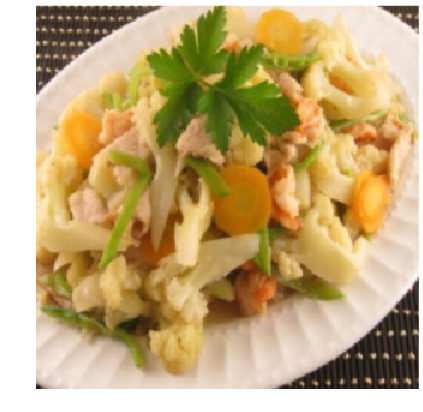} \\
    \includegraphics[width=0.16\linewidth]{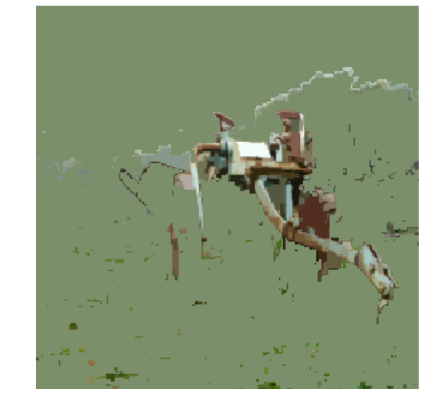} &
    \includegraphics[width=0.16\linewidth]{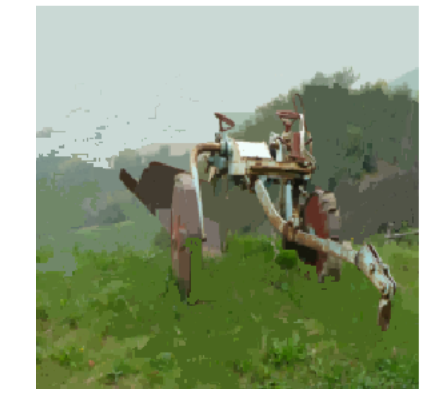} &
    \includegraphics[width=0.16\linewidth]{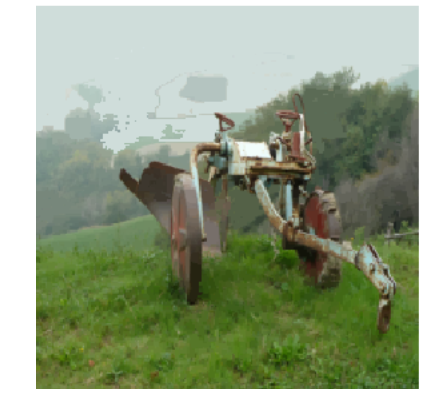} &
    \includegraphics[width=0.16\linewidth]{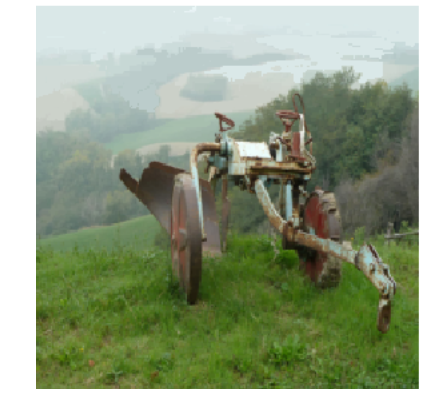} &
    \includegraphics[width=0.16\linewidth]{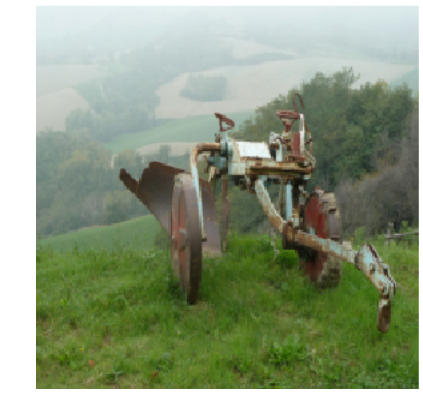} \\
    \includegraphics[width=0.16\linewidth]{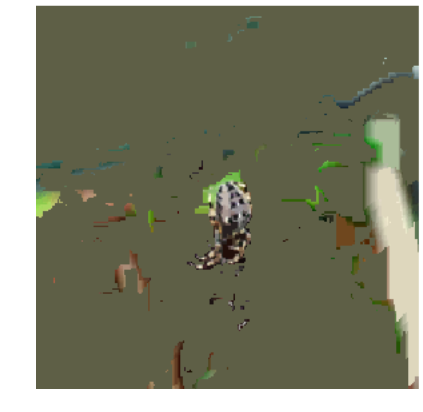} &
    \includegraphics[width=0.16\linewidth]{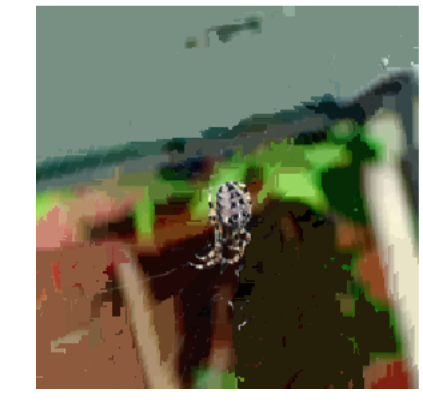} &
    \includegraphics[width=0.16\linewidth]{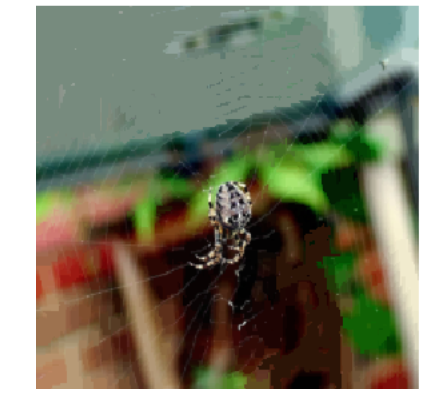} &
    \includegraphics[width=0.16\linewidth]{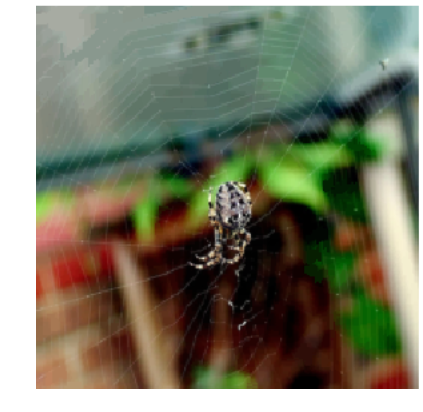} &
    \includegraphics[width=0.16\linewidth]{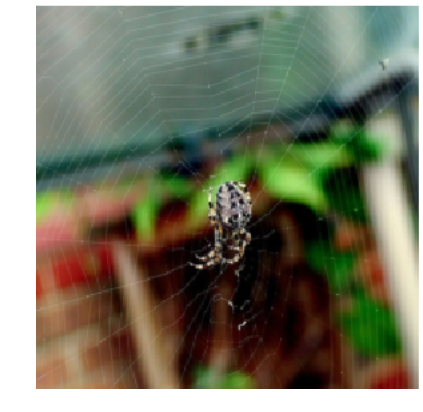} \\
    \includegraphics[width=0.16\linewidth]{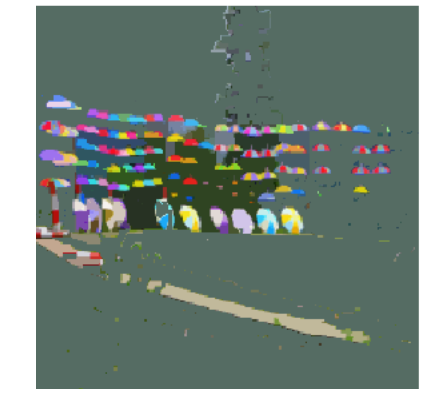} &
    \includegraphics[width=0.16\linewidth]{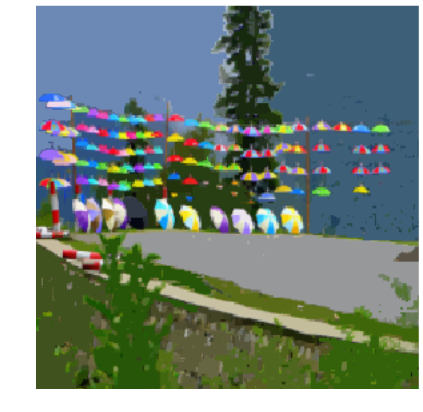} &
    \includegraphics[width=0.16\linewidth]{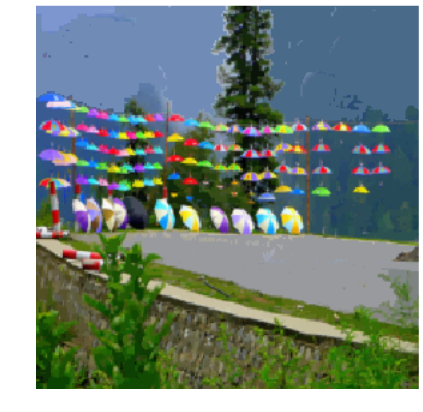} &
    \includegraphics[width=0.16\linewidth]{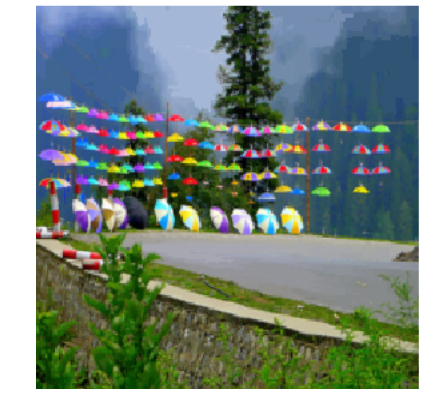} &
    \includegraphics[width=0.16\linewidth]{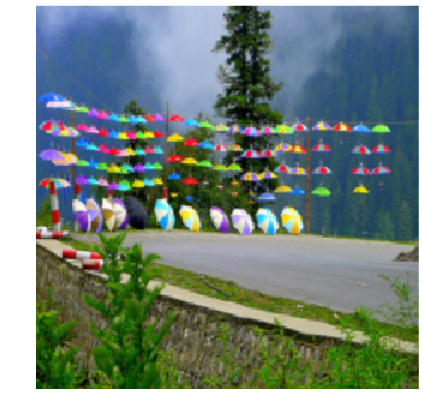} \\
    Sparsity=0.2 & Sparsity=0.4 & Sparsity=0.6 & Sparsity=0.8 & Sparsity=1.0\\
  \end{tabular}
\caption{The image path generated with our instance smoothing algorithm in Eq.~\ref{eq:lbi}. From left to right, the images correspond to sparsity levels of 0.2, 0.4, 0.6, 0.8, and 1.0 (the original image). The 1st to the 6th rows represent a curly-coated retriever; a holster, a dish made of zucchini; a garden spider; a plow; umbrellas.}
  \label{fig.path_shape}
\end{figure}

\begin{figure}[ht!]
  \centering
  \resizebox{\linewidth}{!}{
  \begin{tabular}{ccccc}
    \includegraphics[width=0.16\linewidth]{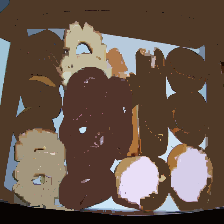} &
    \includegraphics[width=0.16\linewidth]{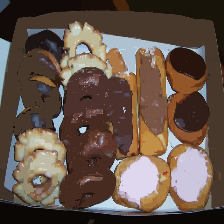} &
    \includegraphics[width=0.16\linewidth]{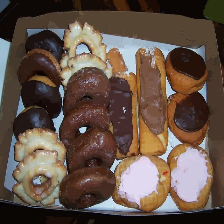} &
    \includegraphics[width=0.16\linewidth]{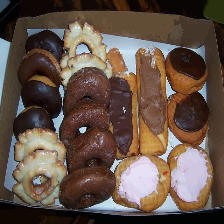} &
    \includegraphics[width=0.16\linewidth]{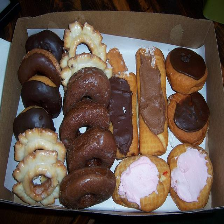} \\
    \includegraphics[width=0.16\linewidth]{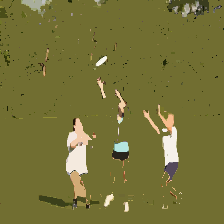} &
    \includegraphics[width=0.16\linewidth]{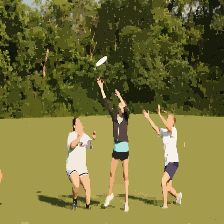} &
    \includegraphics[width=0.16\linewidth]{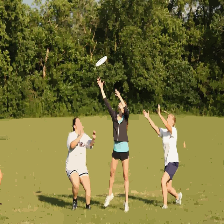} &
    \includegraphics[width=0.16\linewidth]{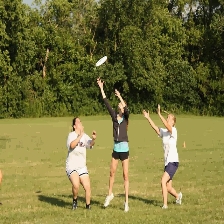} &
    \includegraphics[width=0.16\linewidth]{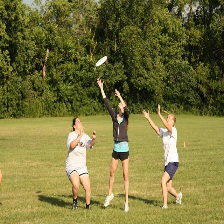} \\
    \includegraphics[width=0.16\linewidth]{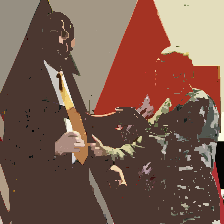} &
    \includegraphics[width=0.16\linewidth]{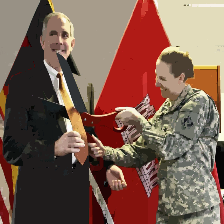} &
    \includegraphics[width=0.16\linewidth]{} &
    \includegraphics[width=0.16\linewidth]{} &
    \includegraphics[width=0.16\linewidth]{} \\
   Sparsity=0.2 &  Sparsity=0.4 &  Sparsity=0.6 & Sparsity=0.8 &  Sparsity=1.0 
  \end{tabular}}
  \caption{The image path generated with our instance smoothing algorithm in Eq.~\ref{eq:lbi} for multi-object images (COCO dataset).}
  \label{fig.sol_path_coco}
\end{figure}

\section{More Results of Adversarial Robustness}
\label{sec.more_adv}

We additionally compare our method to some adversarial defense methods, including PNI \citep{he2019parametric}, SAT\citep{huang2020self}, AWP\citep{wu2020adversarial} and adversarial training results from \citep{wang2020high} into comparison with ResNet18 as backbone and $\varepsilon={8}/{255}$.

\begin{table}
\caption{Comparison with other adversarial defence method.}
\label{tab:adv2}
\centering
\resizebox{1.0\linewidth}{!}{
\begin{tabular}{c|ccccccccccc}
\toprule[1pt]
\diagbox{Dataset}{Method} & Adv Training \citep{wang2020high} & SAT\citep{huang2020self} & AWP\citep{wu2020adversarial} & PNI (o)\citep{he2019parametric} & PNI (w) & TV layer (o)\citep{yeh2022total} & TV layer (w) & Fix (o) & Fix (w) & Iterative(o) & Iterative (w) \\
\hline
CIFAR10 &  43.50 & \textbf{55.81} & 55.30 & 41.07 & 51.21 & 43.57 & 53.25 & 37.23 & 51.19 & 35.31 & 44.79\\
CIFAR100 & - & - & \textbf{29.09} & 20.78 & 23.06 & - & - & 5.26 & 23.06 & 6.16 & 27.63 \\
\bottomrule[1pt]
\end{tabular}}
\end{table}

\section{Social Impact}

This work in this paper can make the network more robust to various types of noise. Thus it may improve the safety of some devices like self-driving cars. It has the potential of well protecting the society.

\end{document}